\newtheorem{theorem}{Theorem}
\newtheorem{problem}{Problem}
\newcommand{\removelatexerror}{\let\@latex@error\@gobble}
\title{Common Mode Patterns for Supervised Tensor Subspace Learning}
\name{Konstantinos Makantasis$^1$, Anastasios Doulamis$^2$, Nikolaos Doulamis$^2$, and Athanasios Voulodimos$^3$}
\address{$^1$ School of Production Engineering and Management, Technical University of Crete, Chania, Greece \\
$^2$ School of Rural and Surveying Engineering, National Technical University of Athens, Athens, Greece \\
$^3$ Department of Informatics and Computer Engineering, University of West Attica, Athens, Greece}
\begin{document}
%\ninept
%

\maketitle
\begin{abstract}
In this work we propose a method for reducing the dimensionality of tensor objects in a binary classification framework. The proposed Common Mode Patterns method takes into consideration the labels' information, and ensures that tensor objects that belong to different classes do not share common features after the reduction of their dimensionality. We experimentally validate the proposed supervised subspace learning technique and compared it against Multilinear Principal Component Analysis using a publicly available hyperspectral imaging dataset. Experimental results indicate that the proposed CMP method can efficiently reduce the dimensionality of tensor objects, while, at the same time, increasing the inter-class separability. 
\end{abstract}
\begin{keywords}
Tensor dimensionality reduction, supervised tensor subspace learning, common mode patterns
\end{keywords}
%

%%% INTRODUCTION %%%
\section{Introduction}
\label{sec:intro}

Advances in sensing technologies have led to the continuous generation of massive multidimensional data, used in a wide range of applications. Their successful exploitation, however, is directly linked to the effectiveness of pattern recognition methods employed for their analysis. Despite the high dimensionality, this kind of data is often characterized by large amounts of redundancy, occupying a subspace of the input space \cite{shakhnarovich2011face}. In this context, feature extraction for subspace learning plays a crucial role towards the mapping of high-dimensional data to a low-dimensional space \cite{lu2008mpca, nie2009extracting, lai2013sparse, hu2011incremental}. However, feature extraction is often a challenging task due to the complex distribution of input data \cite{lu2011survey}, especially in cases of limited training samples  \cite{makantasis2017tensor, makantasis2018tensor}. 

The goal of feature extraction is to extract information regarding the underlying nature of the data. Unsupervised feature extraction methods in particular aim at capturing the principal statistical relation within the data and represent it in lower dimension spaces. This type of methods is referred to as unsupervised subspace learning and includes techniques such as 2D Principal Component Analysis (2D-PCA) \cite{yang2004two}, Generalized Low Rank Approximation of Matrices (GLRAM) \cite{ye2005generalized}, Concurrent Subspace Analysis \cite{xu2008reconstruction} and Multilinear Principal Component Analysis (MPCA) \cite{lu2008mpca}. Such methods can also decrease the computational cost of pattern recognition algorithms (e.g. for classification or regression) through the reduction of data dimensionality.  

The main objective of pattern recognition, however, is the extraction of features capable of discriminating different classes. Although unsupervised subspace learning can provide a valuable tool for data analysis, the features extracted are not necessarily those salient features required to discriminate among pattern classes, since the problem of finding discriminative features is conceptually and fundamentally different than mapping data to a lower dimension space. Different sets of features should be used for different classes, which means that the feature extraction process should be conducted in a supervised manner (supervised subspace learning).  

In this paper, we propose a supervised subspace learning method, which is motivated by the Common Spatial Patterns (CSP) \cite{ramoser2000optimal, blankertz2008optimizing} algorithm. The CSP algorithm is based on a modification of the Karhunen-Loeve expansion \cite{fukunaga1970application}, aiming at extracting features that increase inter-class separability. The application of CSP, however, is restricted to 2D data. Motivated by this fact, we extend the CSP algorithm to tensor objects of arbitrary order. In particular, we extract the common patterns corresponding to each mode of tensor objects - hence naming the former Common Mode Patterns (CMP) - that increase the separability between two classes.

\section{Preliminaries}
\label{sec:preliminaries}
In this section we present some tensor algebra definitions and operations that will be used throughout this work. Tensor objects are denoted in calligraphic uppercase letters, matrices in bold uppercase letters, vectors in bold lowercase letters and scalars in lowercase letters.  

%\vspace{0.02in}
%\noindent \textbf{Tensor vectorization}. Given a $N$-th order tensor $\mathcal A \in \mathbb R^{I_1 \times \cdots \times I_N}$, which is addressed by $N$ indices $i_n$, $n=1, \cdots, N$, with each $i_n$ addressing the $n$-mode of $\mathcal A$, the $\textit{vec}(\mathcal A)$ operator stacks the entries of $\mathcal A$ into a column vector.

\vspace{0.02in}
\noindent \textbf{Tensor matricization}. Mode-$n$ matricization maps a tensor $\mathcal B$ into a $I_n \times \prod_{n' \neq n}I_{n'}$ matrix  $\bm B_{(n)}$, by arranging the mode-$n$ fibers to be the columns of the resulting matrix. 

\vspace{0.02in}
\noindent \textbf{$\bm n$-mode product}. The $n$-mode product of a tensor $\mathcal A \in \mathbb R^{I_1 \times \cdots \times I_N}$ and a matrix $\bm B \in \mathbb R^{J \times I_n}$ denoted as $\mathcal A \times_n \bm B$ is a tensor in $\mathbb R^{I_1 \times I_2 \times \cdots \times I_{n-1} \times J \times I_{n+1} \times \cdots \times I_N}$ with entries
	\begin{equation}
	\begin{split}
	(\mathcal A \times_n \bm B)(i_1, \cdots, i_{n-1}, j, &i_{n+1},\cdots, i_N) = \\
	&\sum_{i_n} \mathcal A(i_1,\cdots,i_N) \bm B(j, i_n). \nonumber
	\end{split}
	\end{equation}
	
\noindent \textbf{Scalar product}. The scalar product of two tensors $\mathcal A, \mathcal B \in \mathbb R^{I_1 \times \cdots \times I_N}$ is denoted as $\langle \mathcal A, \mathcal B \rangle$ and is equal to $\langle \textit{vec}(\mathcal A), \textit{vec}(\mathcal B) \rangle$.

\vspace{0.02in}
\noindent \textbf{Tensor norm}. The Frobenius norm of a tensor $\mathcal A$ is defined as $||\mathcal A||_F = \sqrt{\langle\mathcal A, \mathcal A\rangle}$.

\vspace{0.02in}
\noindent \textbf{Average total scatter}. The average total scatter of a set of tensors $\{\mathcal A_m\}_{m=1}^M$ is defined as 
\begin{equation}
\Psi_{\mathcal A} = \frac{1}{M}\sum_{m=1}^M ||\mathcal A_m - \bar{\mathcal A}||_F^2,
\end{equation}
where $\bar{\mathcal A} = \frac{1}{M} \sum_{m=1}^M \mathcal A_m$.

\vspace{0.02in}
\noindent \textbf{Average mode-$\bm n$ scatter matrix}. The average mode-$n$ scatter matrix of a set of tensors $\{\mathcal A_m\}_{m=1}^M$ is defined as 
\begin{equation}
\Psi_{n, \mathcal A} = \frac{1}{M}\sum_{m=1}^M ||\bm A_{(n),m} - \bar{\bm A}_{(n)}||_F^2, 
\end{equation}
where $\bar{\bm A}_{(n)} = \frac{1}{M} \sum_{m=1}^M \bm A_{(n),m}$, and $\bm A_{(n),m}$ is the $n$-mode matricization of $\mathcal A_m$.

\section{Problem Formulation}
We consider a binary classification problem, where the samples are tensor objects. Let $\{\mathcal A_m^{(i)} \in\mathbb R^{I_1 \times I_2 \times \cdots \times I_N} \}_{m=1}^{M_i}$, $i=1,2$ be a set of $M_i$ samples that belong to the $i$-th class, and $\{\bm U_n\}_{n=1}^N$ a set of matrices, where $\bm U_n \in \mathbb R^{I_n \times P_n}$ with $P_n \leq I_n$. The projection of any $\mathcal A_m^{(i)}$ onto the subspace $\mathbb R^{P_1, \cdots, P_N}$ is defined as
\begin{equation}
\label{eq:tensor_projection}
\mathcal S_m^{(i)} = \mathcal A_m^{(i)} \times_1 \bm U_1^T \times_2 \bm U_2^T \cdots \times_N \bm U_N^T.
\end{equation}
A matrix representation of this projection can be obtained through the mode-$n$ matricization of $\mathcal S_m^{(i)}$ and $\mathcal A_m^{(i)}$ as 
\begin{equation}
\label{eq:projection}
\bm S_{(n),m}^{(i)} = \bm U_n^T \cdot \bm A_{(n),m}^{(i)} \cdot \bm U_{\bm \Phi_n}
\end{equation}
with
\begin{equation}
\label{eq:Phi}
\begin{split}
\bm U_{\bm \Phi_n} = \bm U_{n+1} &\otimes \bm U_{n+2} \otimes \cdots \\
&\otimes \bm U_{N} \otimes \bm U_{1} \otimes \bm U_{2} \otimes \cdots \otimes \bm U_{n-1}.
\end{split}
\end{equation}
In relation (\ref{eq:Phi}), the operator $\otimes$ denotes the Kronecker product.

The objective of this work is to project the tensor samples $\mathcal A_m^{(i)}$ onto a subspace, where the explained variance for each class is maximized, and the different properties of each class are emphasized. The projected samples thus minimize information loss and can discriminate between the two different pattern classes. By assuming that the variance of a class can be measured by the average total scatter of the tensor samples belonging to this class \cite{lu2008mpca}, we can formally define the problem that needs to be solved for achieving the aforementioned objective.

\begin{problem}
	Estimate a single set $\{\tilde{\bm U}_n\}_{n=1}^N$ of projection matrices [see (\ref{eq:tensor_projection})] that satisfy
	\begin{equation}
	\label{eq:objective}
	\{\tilde{\bm U}_n, \: n=1,\cdots, N\} = \arg \max_{\bm U_1, \cdots, \bm U_N} \Psi_{\mathcal S^{(i)}}
	\end{equation}
	for $i=1,2$, such that the projected samples that belong to different classes will not share common important features.
\end{problem}

In Problem 1, defined above, $\mathcal S_m^{(i)}$ is the projection of $\mathcal A_m^{(i)}$ onto a subspace using the projection matrices $\{\tilde{\bm U}_n\}$, while
\begin{equation}
\label{eq:Psi_i}
\Psi_{\mathcal S^{(i)}} = \frac{1}{M_i}\sum_{m=1}^{M_i} ||\mathcal S_m^{(i)} - \bar{\mathcal S}^{(i)}||_F^2
\end{equation}
and 
\begin{equation}
\bar{\mathcal S}^{(i)} = \frac{1}{M_i} \sum_{m=1}^{M_i} \mathcal S_m^{(i)}.
\end{equation}

\noindent \textit{Remark 1.} Suppose that $\{\tilde{\bm U}_n, \: n=1,\cdots, N\}$ is a set of projection matrices that satisfies (\ref{eq:objective}) either for $i=1$ or for $i=2$. Then, as is shown in \cite{lu2008mpca}, each matrix $\bm U_n$, $n=1,\cdots,N$, consists of the $P_n$ eigenvectors corresponding to the largest $P_n$ eigenvalues of the matrix
\begin{equation}
\label{eq:Phi_n}
\begin{split}
\bm \Phi_n^{(i)} = \frac{1}{M_i}\sum_{m=1}^{M_i} (\bm A_{(n),m}^{(i)} - &\bar{\bm A}_{(n)} ) \cdot \tilde{\bm U}_{\bm \Phi_n} \\
& \tilde{\bm U}_{\bm \Phi_n}^T \cdot (\bm A_{(n),m}^{(i)} - \bar{\bm A}_{(n)} )^T,
\end{split}
\end{equation}
where $ \tilde{\bm U}_{\bm \Phi_n}$ is as in (\ref{eq:Phi}). However, in Problem 1, the set of projection matrices should satisfy (\ref{eq:objective}) both for $i=1$ and $i=2$, and, at the same time, the resulting projection should emphasize different sets of features for each class. 

\section{Common Mode Patterns}
\subsection{Normalization Process}
For the CMP algorithm to extract those important features that are required for separating two pattern classes, a preprocessing step, in the sense of a normalization process, is necessary. We hereby present this normalization process.

Suppose that we have at our disposal a set $\{\mathcal B_m^{(i)}, m=1,\cdots,M_i, \: i=1,2\}$ of $M_i$ raw tensor measurements (samples) in $\mathbb R^{I_1 \times \cdots \times I_N}$ that belong to the $i$-th class. Based on these samples we can define the matrix
\begin{equation}
\label{eq:R_n}
\begin{split}
\bm R_n^{(i)}= \frac{1}{M_i}\sum_{m=1}^{M_i} \Big( \bm B_{(n),m}^{(i)} &- \bar{\bm B}_{(n)}^{(i)} \Big)\cdot \\ 
&\Big( \bm B_{(n),m}^{(i)} - \bar{\bm B}_{(n)}^{(i)} \Big)^T. 
\end{split}
\end{equation}
For every $m=1\cdots M_i$ and $i=1,2$ the matrix
\begin{equation}
\Big( \bm B_{(n),m}^{(i)} - \bar{\bm B}_{(n)}^{(i)} \Big)\cdot 
\Big( \bm B_{(n),m}^{(i)} - \bar{\bm B}_{(n)}^{(i)} \Big)^T \nonumber
\end{equation}
is symmetric. Hence, matrix $\bm R_n^{(i)}$ is also symmetric, since it is the weighted sum of symmetric matrices. 

Let us define the symmetric matrix $\bm R_n = \bm R_n^{(1)} + \bm R_n^{(2)}$. Since $\bm R_n$ is symmetric, there exists the transformation matrix
\begin{equation}
\label{eq:whitening}
\bm Z_n = \text{diag}(\bm \lambda_n)^{-1/2} \cdot \bm V_n^T
\end{equation}
such that 
\begin{eqnarray}
\label{eq:12}
\bm Z_n \cdot \bm R_n \cdot \bm Z_n^T = \bm Z_n \cdot \bm R_n^{(1)} \cdot \bm Z_n^T + \bm Z_n \cdot \bm R_n^{(2)} \cdot \bm Z_n^T = \bm I.
\end{eqnarray}
In (\ref{eq:whitening}), $\text{diag}(\bm \lambda_n)$ stands for the diagonal matrix of eigenvalues of $\bm R_n$, while $\bm V_n$ for the matrix of eigenvectors of $\bm R_n$.

Following the above normalization process, we define the mode-$n$ matricization of tensor objects $\{\mathcal A_m^{(i)}\}_{m=1}^{M_i}$, $i=1,2$ that need to be projected onto a subspace as
\begin{equation}
\label{eq:normalization}
\bm A_{(n),m}^{(i)} = \bm Z_n \bm B_{(n),m}^{(i)}.
\end{equation}
The normalization process takes place before the projection, and actually corresponds to a linear transformation, which is applied on the tensor objects.

\subsection{The CMP Algorithm}
This section presents the CMP algorithm, which constitutes the core contribution of this paper. The CMP algorithm is based on Theorem 1 below.
\begin{theorem}
	Let $\{\tilde{\bm U}_n, \: n=1,\cdots, N\} $ be the solution to Problem 1. Then, given all other projection matrices $\tilde{\bm U}_1,\cdots,\tilde{\bm U}_{n-1}, \tilde{\bm U}_{n+1},\cdots,\tilde{\bm U}_N$, matrix $\tilde{\bm U}_n$ consists of the $P_n/2$ eigenvectors corresponding to the largest eigenvalues of the matrix $\bm \Phi_n^{(1)}$ and the $P_n/2$ eigenvectors corresponding to the largest eigenvalues of the matrix $\bm \Phi_n^{(2)}$.
\end{theorem}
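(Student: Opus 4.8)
The plan is to adopt the alternating‑optimisation viewpoint behind Remark~1: fix $\tilde{\bm U}_{n'}$ for every $n'\neq n$ and characterise the $\tilde{\bm U}_n$ that is optimal for Problem~1. First I would put each per‑class objective in trace form. Writing the projection in matricised form~(\ref{eq:projection}), centring at the class mean, and using that the Frobenius norm is invariant under matricisation together with $\|\bm X\|_F^2=\mathrm{tr}(\bm X\bm X^T)$, one obtains $\Psi_{\mathcal S^{(i)}}=\mathrm{tr}\big(\tilde{\bm U}_n^T\bm\Phi_n^{(i)}\tilde{\bm U}_n\big)$ with $\bm\Phi_n^{(i)}$ the mode‑$n$ scatter matrix of~(\ref{eq:Phi_n}); this is the computation underlying Remark~1, so that over $\tilde{\bm U}_n$ with orthonormal columns the single‑class maximiser of $\Psi_{\mathcal S^{(i)}}$, by the trace‑maximisation principle for symmetric matrices, consists of the $P_n$ dominant eigenvectors of $\bm\Phi_n^{(i)}$.

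The decisive step exploits the normalisation of Section~4.1. Substituting $\bm A_{(n),m}^{(i)}=\bm Z_n\bm B_{(n),m}^{(i)}$ into~(\ref{eq:Phi_n}) and collecting factors yields $\bm\Phi_n^{(i)}=\bm Z_n\bm R_n^{(i)}\bm Z_n^T$, whence
\[
\bm\Phi_n^{(1)}+\bm\Phi_n^{(2)}=\bm Z_n\big(\bm R_n^{(1)}+\bm R_n^{(2)}\big)\bm Z_n^T=\bm Z_n\bm R_n\bm Z_n^T=\bm I
\]
by the whitening identity~(\ref{eq:12}). Hence $\bm\Phi_n^{(2)}=\bm I-\bm\Phi_n^{(1)}$: the two matrices share a common orthonormal eigenbasis $\{\bm v_k\}$ and have complementary eigenvalues, $\bm\Phi_n^{(1)}\bm v_k=\mu_k\bm v_k\Rightarrow\bm\Phi_n^{(2)}\bm v_k=(1-\mu_k)\bm v_k$ with $\mu_k\in[0,1]$. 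Two consequences follow. For any $\tilde{\bm U}_n$ with orthonormal columns, $\Psi_{\mathcal S^{(1)}}+\Psi_{\mathcal S^{(2)}}=\mathrm{tr}(\tilde{\bm U}_n^T\tilde{\bm U}_n)=P_n$ is constant, so a single $\tilde{\bm U}_n$ cannot maximise both objectives and the two requirements of Problem~1 compete directly; and a direction carrying large variance for one class carries small variance for the other, so no $\bm v_k$ can be an ``important feature'' of both classes.

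It remains to turn this into the claimed form. Because the sum is constant, the operative reading of Problem~1 --- matching the CSP heuristic --- is a balanced compromise in which $P_n/2$ of the $P_n$ retained columns of $\tilde{\bm U}_n$ are devoted to class~1 and $P_n/2$ to class~2, the ``no shared important features'' constraint being exactly what enforces the split: by complementarity a column tuned to class~1's high‑variance directions is of negligible use to class~2, so no column serves both. Within a budget of $P_n/2$ columns the projection maximising $\Psi_{\mathcal S^{(1)}}$ consists, by the trace‑maximisation principle, of the $P_n/2$ eigenvectors with the largest eigenvalues of $\bm\Phi_n^{(1)}$, and symmetrically of the $P_n/2$ eigenvectors with the largest eigenvalues of $\bm\Phi_n^{(2)}$ for class~2; since ``largest for $\bm\Phi_n^{(1)}$'' coincides with ``smallest for $\bm\Phi_n^{(2)}$'' and $P_n\le I_n$, these two index sets are disjoint and their union forms $P_n$ orthonormal columns, which is precisely the matrix in the statement.

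The step I expect to be the main obstacle is the reduction $\bm\Phi_n^{(i)}=\bm Z_n\bm R_n^{(i)}\bm Z_n^T$. As written, $\bm\Phi_n^{(i)}$ in~(\ref{eq:Phi_n}) carries the Kronecker factor $\tilde{\bm U}_{\bm\Phi_n}\tilde{\bm U}_{\bm\Phi_n}^T$ in its core and is centred at the pooled mean, whereas $\bm R_n^{(i)}$ in~(\ref{eq:R_n}) has neither feature, so obtaining the clean identity $\bm\Phi_n^{(1)}+\bm\Phi_n^{(2)}=\bm I$ requires care: one takes the remaining $\tilde{\bm U}_{n'}$ to have orthonormal columns (so the core factor is an orthogonal projector, and the identity when the other modes are left uncompressed), re‑whitens the mode‑$n$ second moment of the partially projected data at each step of the alternating scheme, and reads the mean in~(\ref{eq:Phi_n}) as the relevant class mean. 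Making this bookkeeping rigorous, rather than the subsequent eigen‑analysis, is the real crux.
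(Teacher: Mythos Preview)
Your proposal follows essentially the same route as the paper: put each class objective in the trace form $\Psi_{\mathcal S^{(i)}}=\mathrm{tr}(\tilde{\bm U}_n^T\bm\Phi_n^{(i)}\tilde{\bm U}_n)$, use the whitening to get $\bm\Phi_n^{(1)}+\bm\Phi_n^{(2)}=\bm I$ (the paper does this via intermediate matrices $\bm C_{(n)}^{(i)}=\bm Z_n\bm R_n^{(i)}\bm Z_n^T$ and then argues ``similarity'' with $\bm\Phi_n^{(i)}$), deduce a common eigenbasis with complementary eigenvalues, and conclude the $P_n/2$--$P_n/2$ split. The obstacle you single out --- the $\tilde{\bm U}_{\bm\Phi_n}\tilde{\bm U}_{\bm\Phi_n}^T$ factor and the pooled-versus-class mean --- is exactly where the paper's own proof is loose, and it handles it precisely as you anticipate: by taking the other $\tilde{\bm U}_{n'}$ orthogonal (so $\tilde{\bm U}_{\bm\Phi_n}^T=\tilde{\bm U}_{\bm\Phi_n}^{-1}$ and the core factor drops out, which in Algorithm~1 corresponds to keeping the other modes full until the final truncation) and by silently switching to class means in the displayed computations.
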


\begin{proof}
	From the definition of Frobenius norm for a tensor and that for a matrix, $||\mathcal A||_F = ||\bm A_{(n)}||_F$, and from  Eq. (\ref{eq:projection}), it holds that
	\begin{equation}
	\begin{split}
	\Psi_{S^{(i)}} &=  \frac{1}{M_i}\sum_{m=1}^{M_i} ||\mathcal S_m^{(i)} - \bar{\mathcal S}^{(i)}||_F^2 \\
	& = \frac{1}{M_i}\sum_{m=1}^{M_i} || \tilde{\bm U}_n^T \cdot \big(\bm A_{(n),m}^{(i)} - \bar{\bm A}_{(n)}^{(i)}\big) \cdot \tilde{\bm U}_{\bm \Phi_n}||_F^2
	\end{split}
	\end{equation}
	Moreover, from Eq. (\ref{eq:Phi_n}) %, and the fact that $||\bm A||_F^2 = \text{trace}(\bm A \bm A^T)$, 
	$\Psi_{S^{(i)}}$ can be written as
	\begin{equation}
	\label{eq:15}
	\begin{split}
	\Psi_{S^{(i)}} &= \frac{1}{M_i}\sum_{m=1}^{M_i} \text{trace} \Big( \tilde{\bm U}_n^T \cdot \big(\bm A_{(n),m}^{(i)} - \bar{\bm A}_{(n)}^{(i)}\big) \cdot \tilde{\bm U}_{\bm \Phi_n} \cdot \\
	&\;\;\;\;\;\;\;\;\;\;\;\;\;\;\;\;\;\;\;\;\;\;\;\;\;\; \tilde{\bm U}_{\bm \Phi_n}^T \cdot \big(\bm A_{(n),m}^{(i)} - \bar{\bm A}_{(n)}^{(i)}\big)^T \cdot \tilde{\bm U}_n \Big) \\
	& =  \frac{1}{M_i} \text{trace} \big(\tilde{\bm U}_n^T \cdot \bm \Phi_n^{(i)} \tilde{\bm U}_n \big)
	\end{split}.
	\end{equation}
	The maximum trace of $(\tilde{\bm U}_n^T \cdot \bm \Phi_n^{(i)} \tilde{\bm U}_n)$ is obtained if $\tilde{\bm U}_n$ consists of the $P_n$ eigenvectors of matrix $\bm \Phi_n^{(i)}$ corresponding to the largest $P_n$ eigenvalues. Since we want to maximize $\Psi_{S^{(i)}}$ simultaneously for $i=1$ and $i=2$, matrix $\tilde{\bm U}_n$ will consist of the $P_n/2$ eigenvectors corresponding to the largest eigenvalues of matrix $\bm \Phi_n^{(1)}$ and the $P_n/2$ eigenvectors corresponding to the largest eigenvalues of matrix $\bm \Phi_n^{(2)}$.
	
	Let us denote as $\bm C^{(i)}$ the matrix
	\begin{equation}
	\label{eq:16}
	\bm C^{(i)}_{(n)} =  \frac{1}{M_i}\sum_{m=1}^{M_i} \Big( \big(\bm A_{(n),m}^{(i)} - \bar{\bm A}_{(n)}^{(i)}  \big) \big(\bm A_{(n),m}^{(i)} - \bar{\bm A}_{(n)}^{(i)}\big)  \Big).
	\end{equation}
	After the normalization process [see Eq. (\ref{eq:12})]
	\begin{equation}
	\label{eq:17}
	\bm C^{(1)}_{(n)} + \bm C^{(2)}_{(n)} =\bm I.
	\end{equation}
	The eigenvalues and eigenvectors of $\bm C^{(i)}_{(n)}$ are given by
	%\begin{equation}
	%\label{eq:18}
	%\bm C^{(1)}_{(n)} \bm V_{(n)}^{(1)} = \bm \lambda_{(n)}^{(1)}  \bm V_{(n)}^{(1)}, 
	%\end{equation}
	%while the eigenvalues and eigenvectors of $\bm C^{(2)}_{(n)}$ are given by
	\begin{equation}
	\label{eq:19}
	\bm C^{(i)}_{(n)} \bm V_{(n)}^{(i)} =  \Big(\bm I - \bm C^{(j)}_{(n)} \Big) \bm V_{(n)}^{(i)}   = \bm \lambda_{(n)}^{(i)}  \bm V_{(n)}^{(i)},
	\end{equation}
	with $i \neq j$. From Eq. (\ref{eq:17}), and (\ref{eq:19}) we have that 
	\begin{equation}
	\label{eq:20}
	\bm \lambda_{(n)}^{(2)} = \big(\bm I - \bm \lambda_{(n)}^{(1)} \big).
	\end{equation}
	The same holds for matrices $\frac{1}{M_i}\sum_{m=1}^{M_i} ((\bm A_{(n),m}^{(i)} - \bar{\bm A}_{(n)}^{(i)} ) \cdot \tilde{\bm U}_{\bm \Phi_n}  \tilde{\bm U}_{\bm \Phi_n}^T (\bm A_{(n),m}^{(i)} - \bar{\bm A}_{(n)}^{(i)}) ),$ in relation (\ref{eq:15}),
	since they are similar (i.e., have the same eigenvalues) with the matrices in relation (\ref{eq:16}). For this to become clearer, note that $ \tilde{\bm U}_{\bm \Phi_n}^T =  \tilde{\bm U}_{\bm \Phi_n}^{-1}$, since $\tilde{\bm U}_{\bm \Phi_n}$ is the Kronecker product of orthogonal matrices, and thus it is also orthogonal.
	
From Eq. (\ref{eq:20}), we have that the important features for the first class are the least important features for the second class, and vice versa. This means that after the projection, the two classes cannot share common important features.
\end{proof}
The CMP algorithm is presented in Algorithm \ref{alg:1}. Please note that during the estimation of the set $\{\bm U_n\}$ only the $\bm \Phi_n^{(1)}$  matrices are used. $\bm \Phi_n^{(2)}$ matrices are not employed in the algorithm since matrices $\bm \Phi_n^{(1)}$  and $\bm \Phi_n^{(2)}$ have the same eigenvectors and reversely ordered eigenvalues.

\begin{figure}[!h]
	{\begingroup
		\removelatexerror
		\begin{algorithm}[H]
			\caption{Estimation of matrices $\{\bm U_n\}_{n=1}^N$}
			\label{alg:1}
			\SetAlgoLined
			%\textbf{Initialization:} \\
			1. Set $\bm U_n=\bm I_n$, for $n=1, \cdots, N$\\
			%\textbf{Normalization:} \\
			2. Calculate $\bm R_n$ and $\bm Z_n$ using relations (\ref{eq:R_n}) and (\ref{eq:whitening}) for $n=1,\cdots,N$\\
			3. Normalize tensor samples using relation (\ref{eq:normalization})  \\
			4. \Repeat{termination criteria are met}{
				\For{$n=1,...,N$} {
					4.1 Calculate the matrix $\bm U_{\bm \Phi_n}$ of relation.(\ref{eq:Phi})\\
					4.2 Calculate the matrix $\bm \Phi_n^{(1)}$ of relation.(\ref{eq:Phi_n})\\
					4.3 Calculate the eigenvectors of $\bm \Phi_n^{(1)}$ \\
					4.4 Set the columns of $\bm U_n$ equal to the eigenvectors of $\bm \Phi_n^{(1)}$\\
				}
			}
		   5. For each $\bm U_n$ keep the $P_n/2$ eigenvectors with the largest eigenvalues and $P_n/2$ eigenvectors with the smallest eigenvalues.
		\end{algorithm}
		\endgroup}
\end{figure}

\section{Experimental Results}
In this study we  validated the CMP methodology using a widely known and publicly available hyperspectral imaging dataset, named Pavia University, whose number of spectral bands is 103 (see Fig.\ref{fig:datasets}). Ground truth contains 9 classes, while pixels in white color are not annotated.

\begin{figure}[t]
	\centering
	\includegraphics[width=0.52\textwidth]{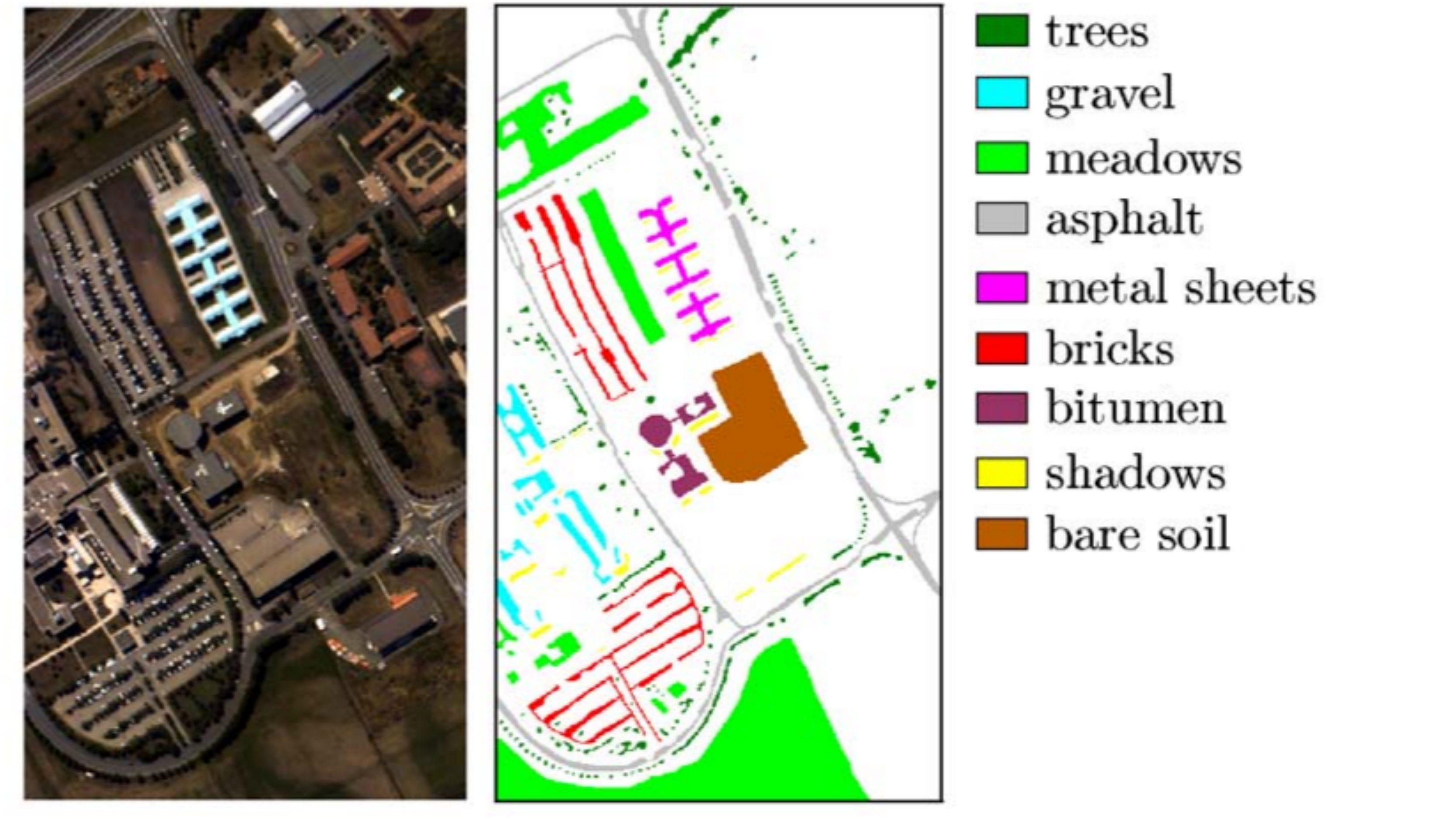}
	\caption{Pavia University dataset (figure taken from \cite{mura2011classification}).}
	\label{fig:datasets}
\end{figure}

The CMP method was developed for binary classification problems. Thus, we grouped together pixels that depict man-made objects and discriminated them from the rest of the pixels. For this dataset, pixels that depict man-made objects are labeled as \textit{asphalt}, \textit{metal sheets}, \textit{bricks} and \textit{bitumen}. Then, the tagged parts of the dataset were split into two sets, i.e. training and testing data. The training set was created by selecting 200 samples from each class.

In order to classify a pixel at location $(x, y)$ on image plane, we followed the approach presented in \cite{makantasis2015deep}, according to which the image is split, along its spatial dimensions, into overlapping patches of size $s \times s \times c$, where $c$ is the number of spectral bands. Then, it is assumed that the label of a pixel located at $(x,y)$ position on image plane, will be the same as the label of the patch centered at $(x,y)$ location.

During experimental validation we compared the proposed CMP method against MPCA \cite{lu2008mpca}, using three different classifiers: Rank-1 Tensor Regression (Rank-1 TR) \cite{zhou2013tensor}, CNN \cite{makantasis2015deep, makantasis2015deepmm}, and Rank-1 FNN \cite{makantasis2017tensor, makantasis2018tensor}. The efficiency of CMP and MPCA was quantified in terms of the classification accuracy of the classifiers on testing set. In our experiments, we set the parameter $s$ equal to 7, and required from both MPCA and CMP to reduce the spatial dimension of the samples to $5 \times 5$ elements. Then, two sets of experiments were conducted. In the first one, the spectral dimensionality of the samples was reduced by selecting: the 26 principal components using MPCA (MPCA-26); the 26 principal components for each pattern class using CMP (CMP-26), and the 13 principal components for each pattern class using CMP (CMP-13), so that the dimensionality along the spectral dimension is 26. In the second one, the spectral dimensionality of the samples was reduced by selecting: the 10 principal components using MPCA (MPCA-10); the 10 principal components for each pattern class using CMP (CMP-10), and the 5 principal components for each pattern class using CMP (CMP-5). In the first experiment the size of the dataset was reduced 4 times, while in the second one 10 times.

\begin{table}[t]
	\centering
	\caption{Overall classification accuracy results (\%).}
	\newcolumntype{L}[1]{>{\hsize=#1\hsize\raggedright\arraybackslash}X}%
	\newcolumntype{C}[1]{>{\hsize=#1\hsize\centering\arraybackslash}X}%
	\label{table:classification}
	
	\begin{tabularx}{0.98\linewidth}{L{4.5}C{5}C{5.2}C{5}}
		\hline \hline 
						& CNN  & Rank-1 FNN & Rank-1 TR \\ \hline
		MPCA-26 & 85.08  & 86.80         & 77.56    \\ \hline
		CMP-26   & \textbf{90.41}  & \textbf{91.25}           & \textbf{77.96}    \\ \hline
		CMP-13   & 88.57  & 88.67          & 76.90    \\ \hline
		MPCA-10 & 83.49  & 84.39          & \textbf{77.59}    \\ \hline
		CMP-10   & \textbf{88.23}  & \textbf{88.31}          & 77.52    \\ \hline
		CMP-5     & 86.76 & 86.27          & 76.08    \\ \hline \hline \\
	\end{tabularx}
\end{table} 

The comparison between MPCA and CMP is presented in Table \ref{table:classification}. The CMP method is more efficient than the MPCA for reducing the dimensionality of tensor objects, regardless of the classification model used, due to the fact that it can exploit labels' information. In other words, CMP is a supervised subspace learning technique, while MPCA is an unsupervised one. For Rank-1 TR the classification accuracy is almost the same both when MPCA and CMP methods are used. This is justified by the fact that Rank-1 TR is a linear classifier and, due its low capacity, cannot perform any better on this dataset.

\section{Conclusion}
In this work, we presented the CMP method, a supervised tensor subspace learning technique, which ensures that tensor objects that belong to different classes will not share common important features after dimensionality reduction. The CMP method was compared against MPCA, and experimental results indicate that it can reduce the dimensionality of tensor objects in a more efficient way. However, the main limitation of CMP is that it is designed for binary classification problems. Therefore, the main focus of our future work is, first, to extend this approach to multi-class classification problems. Another priority of our future work includes the evaluation of CMP efficiency on more datasets with comparisons against other supervised tensor subspace learning methods.

% References should be produced using the bibtex program from suitable
% BiBTeX files (here: strings, refs, manuals). The IEEEbib.bst bibliography
% style file from IEEE produces unsorted bibliography list.
% -------------------------------------------------------------------------
\bibliographystyle{IEEEbib}
\bibliography{refs}

\end{document}